\documentclass[11pt]{article}

\usepackage[utf8]{inputenc}
\usepackage[T1]{fontenc}
\usepackage{lmodern}

\usepackage{microtype}
\usepackage{graphicx}
\usepackage{booktabs}
\usepackage{multirow}
\usepackage{subcaption}
\usepackage{caption}

\usepackage{amsmath}
\usepackage{amssymb}
\usepackage{amsthm}
\usepackage{bm}
\usepackage{mathtools}

\usepackage{algorithm}
\usepackage{algpseudocode}

\usepackage{enumitem}
\usepackage{url}
\usepackage[hidelinks]{hyperref}

\usepackage[numbers]{natbib}

\usepackage[margin=1in]{geometry}

\usepackage[table]{xcolor}

\definecolor{goldrank}{RGB}{255,215,0}
\definecolor{silverrank}{RGB}{192,192,192}
\definecolor{bronzerank}{RGB}{205,127,50}

\DeclareMathOperator*{\argmax}{arg\,max}

\newtheorem{theorem}{Theorem}
\newtheorem{proposition}{Proposition}
\newtheorem{definition}{Definition}

\title{
Complement Submodular Information Measures for Balanced and Robust Data Selection
}

\author{
Rishabh Iyer \\
Department of Computer Science \\
The University of Texas at Dallas \\
Richardson, TX 75080 \\
\texttt{rishabh.iyer@utdallas.edu}
}
\date{}

\begin{document}

\maketitle

\begin{abstract}

Submodular optimization has become a fundamental paradigm for data selection, retrieval, summarization, and representation learning due to its ability to model coverage, diversity, and representativeness. However, classical submodular objectives optimize only the selected subset and do not explicitly preserve structural information between the selected subset and the remaining data. In many modern machine learning applications, including train/validation/test splitting, benchmark construction, and robust subset selection, the quality of a selection depends critically on preserving balanced structure across both the selected subset and its complement. 

In this work, we introduce \emph{Complement Submodular Information} (CSI), a new class of complement-aware submodular objectives that quantify shared structural information between a subset and its complement. Our framework induces complement-aware variants of several classical submodular functions including Facility Location, Graph Cut, LogDet, Saturated Coverage, Set Cover, Probabilistic Set Cover, and Feature Based Functions. We analyze the theoretical properties of CSI objectives and show that they exhibit approximate monotonicity under bounded curvature conditions, leading to near-$(1-1/e)$ greedy approximation guarantees.

Empirically, CSI objectives consistently outperform standard submodular objectives on robust hidden-slice-aware subset selection. In particular, CSI objectives significantly improve preservation of coherent rare/tail semantic structure while simultaneously suppressing noisy and isolated outliers, leading to substantially improved downstream predictive performance. Synthetic experiments further illustrate how different CSI instantiations capture complementary notions of representativeness, diversity, connectivity, and balanced neighborhood preservation.
\end{abstract}

\section{Introduction}

Submodular optimization has emerged as a fundamental paradigm in machine learning and artificial intelligence due to its natural ability to model coverage, diversity, representativeness, and information \cite{krause2008near, fujishige2005submodular, bilmes2022submodularity}. Over the past decade, submodular objectives and submodular information measures (SMI)~\cite{iyer2021submodular, iyer2021generalized} have been widely used in applications including active learning, dataset pruning, retrieval, representation learning, distribution shift adaptation, and benchmark construction \cite{kothawade2021similar, kothawade2022talisman, karanam2022orient, smola2026submodular}. Classical submodular objectives such as Facility Location, Graph Cut, LogDeterminant, and Set Cover have proven particularly effective for selecting representative or diverse subsets under limited labeling, compute, or memory budgets.

Despite their success, existing submodular objectives fundamentally optimize only the selected subset itself. However, in many modern machine learning settings, the quality of a selection depends not only on the selected subset, but also on the structure preserved in the remaining data. For example, in train/validation/test splitting and robust subset selection, one seeks balanced preservation of representative structure across both head and tail semantic slices while simultaneously suppressing noisy or isolated outliers. Standard representative objectives may \emph{over-focus on dominant regions, while diversity-oriented objectives can over-select outliers that appear highly dissimilar from the rest of the dataset}.

\begin{figure}[t]
    \centering
    \includegraphics[width=0.88\textwidth]{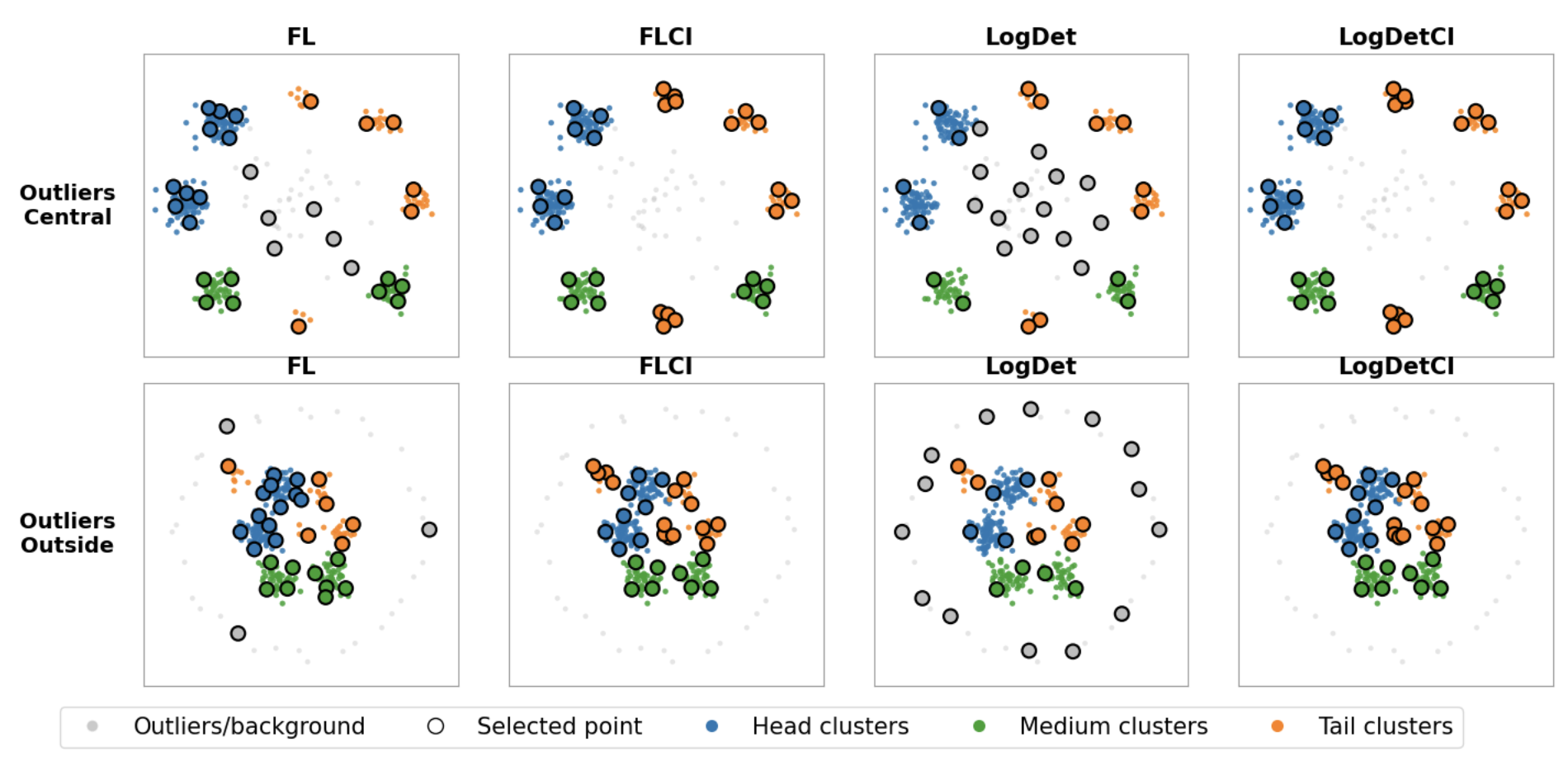}
    \caption{
    Synthetic illustration comparing standard submodular objectives and complement-aware CSI objectives. Standard diversity objectives such as LogDet tend to over-select isolated outliers, while representative objectives such as Facility Location (FL) may bias toward dominant regions or central background structure. In contrast, CSI objectives preserve information jointly across the selected subset and its complement, leading to balanced preservation of head and tail semantic structure while suppressing outliers.
    }
    \label{fig:csi_main}
\end{figure}

To address this limitation, we introduce a new class of objectives called \emph{Complement Submodular Information} (CSI) measures. Given a normalized monotone submodular function $f$, we define the complement information between a subset $A$ and its complement $V \setminus A$ as:
\[
I_f(A;V \setminus A)
=
f(A) + f(V \setminus A) - f(V).
\]
CSI objectives can be interpreted as complement-aware analogues of submodular mutual information measures, encouraging structural preservation jointly across the selected subset and the remaining data. The framework naturally induces complement-aware variants of classical submodular functions including Facility Location, Graph Cut, LogDet, Probabilistic Set Cover, Set Cover, Saturated Coverage, and Feature Based Functions~\cite{kulesza2012determinantal,cornuejols1983uncapicitated,feige1998threshold,wei2014unsupervised,iyer2015submodular}.

Objectives of the form $f(A)+f(V\setminus A)$ have appeared implicitly or explicitly in prior works on graph partitioning, determinant-based diversity modeling, Gaussian process sensor placement, and more recently benchmark selection \cite{krause2008near, queyranne1998minimizing, smola2026submodular}. However, these works focus on specialized instances rather than a unified complement-aware information framework. In contrast, our work provides a systematic treatment of CSI objectives including their theoretical properties, optimization behavior, function-specific interpretations, and applications to balanced splitting and robust selection.

From a theoretical perspective, CSI objectives are symmetric submodular functions that are generally non-monotone. Nevertheless, we show that CSI objectives exhibit approximate monotonicity under bounded curvature conditions, leading to curvature-dependent near-$(1-1/e)$ approximation guarantees for greedy optimization. Empirically, CSI objectives consistently outperform their corresponding underlying submodular objectives on synthetic experiments, hidden-slice-aware train/validation/test splitting, and robust subset selection. In particular, CSI objectives substantially improve balance across rare/tail and head semantic slices while simultaneously suppressing isolated outliers. Importantly, these semantic slices are not explicitly labeled and arise purely through latent structure in the embedding space.

The main contributions of this work are summarized below:
\begin{itemize}
    \item We introduce Complement Submodular Information (CSI), a unified framework for complement-aware submodular objectives.

    \item We derive complement-aware variants of Facility Location, Graph Cut, LogDet, and related objectives together with curvature-based near-$(1-1/e)$ approximation guarantees.

    \item We demonstrate that CSI objectives significantly improve robust hidden-slice-aware subset selection compared to classical submodular objectives and random selection.
\end{itemize}

\section{Preliminaries}

Let $V=\{1,2,\dots,n\}$ denote a finite ground set. A set function $f:2^V \rightarrow \mathbb{R}$ is called \emph{submodular} if for all $A \subseteq B \subseteq V$ and $e \in V \setminus B$,
\[
f(e \mid A) \ge f(e \mid B),
\]
where
\[
f(e \mid A) \triangleq f(A \cup \{e\}) - f(A)
\]
denotes the marginal gain of adding element $e$ to set $A$. Intuitively, submodularity models the diminishing returns property. A function is \emph{monotone} if $f(e \mid A) \ge 0$ for all $A \subseteq V$ and $e \notin A$, and is \emph{normalized} if $f(\emptyset)=0$.

Submodular functions have been widely used for modeling representativeness, diversity, and coverage in machine learning \cite{fujishige2005submodular, krause2014submodular, bilmes2022submodularity,iyer2015polyhedral}. Common examples include Facility Location, Graph Cut, LogDet, and Set Cover functions. Under a cardinality constraint $|A|\le k$, monotone submodular maximization admits a $(1-1/e)$ approximation guarantee using the classical greedy algorithm \cite{nemhauser1978analysis}.

Submodular information measures (SMI) \cite{iyer2021submodular, iyer2021generalized} generalize classical notions of information using submodular functions. Given two sets $A,Q \subseteq V$, the submodular mutual information induced by a submodular function $f$ is defined as
$I_f(A;Q)
=
f(A) + f(Q) - f(A \cup Q)$.
Similarly, the conditional submodular gain or conditional entropy-style quantity is defined as
$H_f(A \mid P)
=
f(A \cup P) - f(P)$,
where $P \subseteq V$ is a conditioning set. Intuitively, $H_f(A \mid P)$ measures the additional information or utility provided by $A$ beyond what is already captured by $P$.

Several parameterized and function-specific SMI instantiations have been proposed, including Facility Location Mutual Information (FLMI), Facility Location Variant Mutual Information (FLQMI), Graph Cut Mutual Information (GCMI), and Concave Over Modular (COM) objectives. These formulations have been successfully used in targeted active learning, distribution shift adaptation, retrieval-aware learning, and robust subset selection \cite{kothawade2021similar, kothawade2022talisman, karanam2022orient, kothawade2022active}.

In this work, we study a complementary setting where the ``query'' itself is the complement of the selected subset. This leads to a new class of complement-aware objectives that preserve structural information jointly across a subset and its complement.

\section{Complement Submodular Information Measures}

In many machine learning applications, the quality of a subset depends not only on the structure preserved within the selected subset itself, but also on the relationship between the selected subset and the remaining data. Motivated by this observation, we introduce a new class of objectives called \emph{Complement Submodular Information} (CSI) measures.

\subsection{Definition}

Let $f:2^V \rightarrow \mathbb{R}$ be a normalized monotone submodular function~\cite{fujishige2005submodular}. Given a subset $A \subseteq V$, we define the complement submodular information induced by $f$ as:
\[
I_f(A;V\setminus A)
=
f(A)+f(V\setminus A)-f(V).
\]

CSI objectives can be interpreted as complement-aware analogues of submodular mutual information measures, encouraging structural preservation jointly across the selected subset and the remaining data. Unlike classical submodular objectives that optimize only the selected subset itself, CSI objectives explicitly preserve information across both sides of the partition.

\subsection{Properties}

We first establish several important properties of CSI objectives.

\begin{proposition}
Let $f$ be normalized and submodular. Then the CSI objective $g(A)=I_f(A;V\setminus A)$
is symmetric: $g(A)=g(V\setminus A)$.
\end{proposition}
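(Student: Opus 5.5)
The plan is to prove the symmetry directly from the definition of the CSI objective; no submodularity or normalization is actually needed. The only ingredient is the involutive property of complementation within the ground set, namely $V\setminus(V\setminus A)=A$ for every $A\subseteq V$.

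First, I write out $g$ evaluated at the complement by substituting $B=V\setminus A$ into the definition $g(B)=f(B)+f(V\setminus B)-f(V)$:
\[
g(V\setminus A)=f(V\setminus A)+f\bigl(V\setminus (V\setminus A)\bigr)-f(V).
\]
Next, I replace $V\setminus(V\setminus A)$ by $A$, which gives
\[
g(V\setminus A)=f(V\setminus A)+f(A)-f(V).
\]
Finally, I use commutativity of addition to see that this expression coincides with $f(A)+f(V\setminus A)-f(V)=g(A)$.

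There is no real obstacle. The one point worth stating explicitly is that complements are always taken relative to the fixed ground set $V$, so the double complement returns exactly $A$. I would also remark that the hypotheses that $f$ is normalized and submodular are not used here. They matter only for the subsequent properties (submodularity of $g$, $g(\emptyset)=g(V)=0$, and nonnegativity), which the paper presumably establishes next.
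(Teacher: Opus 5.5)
Your proposal is correct and takes the same approach as the paper, which sets $B=V\setminus A$ and observes $I_f(B;V\setminus B)=f(B)+f(A)-f(V)=I_f(A;V\setminus A)$ using $V\setminus(V\setminus A)=A$. Your remark that normalization and submodularity play no role in this step is also accurate.
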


\begin{proposition}
If $f$ is normalized and submodular, then $g(A)=I_f(A;V\setminus A)$
is submodular.
\end{proposition}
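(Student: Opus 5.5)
We write $g(A)=f(A)+\bar f(A)-f(V)$, where $\bar f(A)\triangleq f(V\setminus A)$ is the complement function and $f(V)$ is a constant. A sum of submodular functions plus a constant is submodular, so the whole proof reduces to showing that $\bar f$ is submodular whenever $f$ is. Normalization and monotonicity of $f$ are not needed for this step; they only ensure $g(\emptyset)=0$ and $g\ge 0$, which the statement does not ask for.

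To prove $\bar f$ is submodular, we verify the diminishing-returns inequality directly. Fix $A\subseteq B\subseteq V$ and $e\in V\setminus B$. Set $S=V\setminus(A\cup\{e\})$ and $T=V\setminus(B\cup\{e\})$, so that $T\subseteq S$ and $e\notin S$. Then
\[
\bar f(e\mid A)=f(V\setminus(A\cup\{e\}))-f(V\setminus A)=f(S)-f(S\cup\{e\})=-f(e\mid S),
\]
and in the same way $\bar f(e\mid B)=-f(e\mid T)$. Since $T\subseteq S$ and $e\notin S$, submodularity of $f$ gives $f(e\mid T)\ge f(e\mid S)$. Negating both sides yields $\bar f(e\mid A)\ge \bar f(e\mid B)$, which is exactly submodularity of $\bar f$.

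Adding the inequalities for $f$ and $\bar f$ gives $g(e\mid A)=f(e\mid A)+\bar f(e\mid A)\ge f(e\mid B)+\bar f(e\mid B)=g(e\mid B)$, so $g$ is submodular. The only step that needs care is the index bookkeeping in the complement computation: one must check that $A\subseteq B$ reverses to $V\setminus B\subseteq V\setminus A$, and that the removed element $e$ lies outside both complemented sets, so that the submodularity of $f$ is applied in the correct direction.
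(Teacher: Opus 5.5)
Your proof is correct and complete, and it uses only the diminishing-returns form of submodularity that the paper adopts. The paper states this proposition without proof: Appendix A proves symmetry, approximate monotonicity and the two theorems, but not submodularity. So there is no paper argument to compare against, and yours supplies the missing one.

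It is the standard argument, and it is exactly what the paper's marginal-gain identity $g(e\mid A)=f(e\mid A)-f(e\mid V\setminus(A\cup\{e\}))$ encodes. The first term is nonincreasing in $A$ by submodularity of $f$. The subtracted term is nondecreasing in $A$, because its conditioning set $V\setminus(A\cup\{e\})$ shrinks as $A$ grows. Your complement computation $\bar f(e\mid A)=-f(e\mid S)$ makes this second step explicit.

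You are also right that neither normalization nor monotonicity is needed. One small fix to your side remark: $g\ge 0$ does not use monotonicity. It follows from submodularity plus normalization, via $f(A)+f(V\setminus A)\ge f(V)+f(\emptyset)=f(V)$.
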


While CSI objectives are generally non-monotone, they often exhibit approximate monotonicity in practice. The marginal gain of adding an element $e \notin A$ is: $g(e\mid A)
=
f(e\mid A)
-
f(e\mid V\setminus(A\cup\{e\}))$.

Thus, CSI becomes non-monotone only when an element contributes significantly more strongly to the complement than to the selected subset itself. We show later in the paper that under bounded curvature conditions, CSI objectives exhibit approximate monotonicity and admit near-$(1-1/e)$ approximation guarantees for greedy optimization.

CSI objectives naturally encourage balanced preservation of structure across both partitions. In particular, complement-aware diversity objectives such as LogDetCI suppress isolated outliers while preserving coherent rare semantic modes, while representative objectives such as FLCI encourage balanced coverage across head and tail semantic slices.

\section{Instantiations of CSI Objectives}

The CSI framework naturally induces complement-aware variants of a broad family of submodular objectives. Different CSI instantiations preserve different notions of structure including representativeness, diversity, connectivity, and balanced neighborhood coverage. Table~\ref{tab:csi_functions} summarizes the major CSI objectives studied in this work together with their induced structural behavior.

\begin{table*}[t]
\centering
\caption{Summary of CSI instantiations and their structural properties.}
\label{tab:csi_functions}
\resizebox{\textwidth}{!}{
\begin{tabular}{l l l l}
\toprule
\textbf{Objective} & \textbf{Definition} & \textbf{Primary Property} & \textbf{Behavior} \\
\midrule

FLCI &
$
\sum_{i \in V}
\min\left(
\max_{j \in A} s_{ij},
\max_{j \in V \backslash A} s_{ij}
\right)
$
&
Representative coverage &
Balanced representative partitioning \\

GCCI &
$
\sum_{i \in A, j \in V \backslash A} s_{ij}
$
&
Connectivity preservation &
Balanced graph partitioning \\

LogDetCI &
$
\log\det(S_A)+\log\det(S_{V\setminus A})-\log\det(S_V)
$
&
Diversity preservation &
Rare mode preservation with outlier suppression \\

PSCCI &
$
\sum_{i\in V}
\left(1 - \prod_{j\in A}(1-p_{ij})\right)
\left(1 - \prod_{j\in V \backslash A}(1-p_{ij})\right)
$
&
Probabilistic coverage &
Balanced probabilistic coverage \\

SCCI &
$
\sum_{i\in V}
\min\left\{
\sum_{j\in A}s_{ij},\;
\sum_{j\in V\setminus A}s_{ij},\;
\alpha,\;
\left(
\sum_{j\in V}s_{ij}-\alpha
\right)_+
\right\}
$
&
Saturating neighborhood coverage &
Balanced local neighborhood preservation \\

FBCI &
$
f_{\text{FB}}(A)+f_{\text{FB}}(V\setminus A)-f_{\text{FB}}(V),
\quad
f_{\text{FB}}(A)=
\sum_{\ell=1}^{d}
\psi\left(\sum_{j\in A}x_{j\ell}\right)
$
&
Feature coverage &
Balanced semantic feature preservation \\
\bottomrule
\end{tabular}
}
\end{table*}

\subsection{Facility Location Complement Information (FLCI)}

The classical Facility Location function is defined as:
\[
f_{\text{FL}}(A)
=
\sum_{i\in V}\max_{j\in A}s_{ij},
\]
where $s_{ij}$ denotes pairwise similarity between items $i$ and $j$.

Applying the CSI framework yields:
\begin{align}
\text{FLCI}(A)
&=
f_{\text{FL}}(A)
+
f_{\text{FL}}(V\setminus A)
-
f_{\text{FL}}(V)
\nonumber \\
&=
\sum_{i \in V}
\min\left(
\max_{j \in A} s_{ij},
\max_{j \in V \backslash A} s_{ij}
\right).
\end{align}

FLCI encourages representative structure to be preserved jointly across both partitions. Since a point contributes only when it is well represented by both the selected subset and its complement, FLCI naturally produces balanced representative partitions across head and tail semantic regions. Moreover, isolated outliers contribute little complement information because they typically lack strong representative support from both sides of the partition.

\subsection{Graph Cut Complement Information (GCCI)}

The Graph Cut function is defined as:
\[
f_{\text{GC}}(A)
=
\sum_{i\in V,j\in A}s_{ij}
-
\lambda\sum_{i,j\in A}s_{ij}.
\]

The CSI variant becomes:
\begin{align}
\text{GCCI}(A)
&=
f_{\text{GC}}(A)
+
f_{\text{GC}}(V\setminus A)
-
f_{\text{GC}}(V)
\nonumber \\
&=
\sum_{i \in A, j \in V \backslash A} s_{ij}.
\end{align}

GCCI measures connectivity preserved across the partition boundary and is closely related to balanced graph partitioning objectives. The objective favors subsets that maintain strong cross-partition interactions while discouraging disconnected or isolated structures.

\subsection{LogDet Complement Information (LogDetCI)}

The LogDet function is defined as:
\[
f_{\text{LD}}(A)
=
\log\det(S_A),
\]
where $S_A$ denotes the similarity submatrix corresponding to subset $A$.

The CSI variant becomes:
\[
\text{LogDetCI}(A)
=
\log\det(S_A)
+
\log\det(S_{V\setminus A})
-
\log\det(S_V).
\]

Standard LogDet objectives aggressively favor diversity and therefore may over-select isolated outliers that appear highly dissimilar from the remaining data. In contrast, LogDetCI preserves diversity jointly across both partitions. Coherent rare semantic clusters contribute strongly because they preserve determinant volume on both sides of the partition, while isolated outliers contribute little complement diversity. As a result, LogDetCI naturally balances diversity, rare mode preservation, and outlier suppression.

\subsection{Probabilistic Set Cover Complement Information (PSCCI)}

The Probabilistic Set Cover function is:
\[
f_{\text{PSC}}(A)
=
\sum_{i\in V}
\left(
1-
\prod_{j\in A}(1-p_{ij})
\right),
\]
where $p_{ij}$ denotes probabilistic coverage.

The CSI variant becomes:
\begin{align}
\text{PSCCI}(A)
&=
f_{\text{PSC}}(A)
+
f_{\text{PSC}}(V\setminus A)
-
f_{\text{PSC}}(V)
\nonumber \\
&=
\sum_{i\in V}
\left(
1-\prod_{j\in A}(1-p_{ij})
\right)
\left(
1-\prod_{j\in V \backslash A}(1-p_{ij})
\right).
\end{align}

PSCCI rewards concepts or instances that are probabilistically covered by both the selected subset and its complement. Consequently, the objective favors balanced probabilistic coverage across the partition rather than concentrating coverage entirely within one side. If $p_{ij}$ are binary ($0$ or $1$), this is probabilistic set cover gives the standard set cover function and PSCCI will become the Set Cover CI function. 

\subsection{Saturated Coverage Complement Information (SCCI)}

For similarity-based coverage, the Saturated Coverage function is:
\[
f_{\text{SC}}(A)
=
\sum_{i\in V}
\min\left(
\alpha,\sum_{j\in A}s_{ij}
\right),
\]
where $\alpha$ is a saturation threshold.

Applying the CSI framework yields:
\[
\text{SCCI}(A)
=
f_{\text{SC}}(A)
+
f_{\text{SC}}(V\setminus A)
-
f_{\text{SC}}(V).
\]

Let
\[
a_i=\sum_{j\in A}s_{ij},
\qquad
b_i=\sum_{j\in V\setminus A}s_{ij}.
\]
Then SCCI admits the equivalent form:
\[
\text{SCCI}(A)
=
\sum_{i\in V}
\min\left\{
a_i,\;
b_i,\;
\alpha,\;
(a_i+b_i-\alpha)_+
\right\},
\]
where $(x)_+ = \max(x,0)$.

This expression gives a particularly clean interpretation of complement-aware saturated coverage. A point contributes only when it receives meaningful neighborhood support from both the selected subset and its complement. Dense head-region clusters saturate quickly, reducing the benefit of repeatedly selecting redundant points from dominant regions. In contrast, coherent rare semantic clusters provide mutual neighborhood support across both sides of the partition and therefore contribute strongly to the objective. Isolated outliers typically lack sufficient similarity support from either side and consequently contribute little complement information.

Thus, SCCI naturally encourages balanced local neighborhood preservation while simultaneously suppressing redundant head-region selections and isolated noisy outliers.

\paragraph{Feature-Based Complement Information (FBCI).}
Feature-based submodular functions provide another natural CSI instantiation:
\[
f_{\text{FB}}(A)
=
\sum_{\ell=1}^d
\psi\left(\sum_{j\in A} x_{j\ell}\right),
\]
where $x_{j\ell}\ge 0$ denotes the activation of feature $\ell$ for item $j$ and $\psi$ is concave. The corresponding CSI objective is
\[
\text{FBCI}(A)
=
f_{\text{FB}}(A)
+
f_{\text{FB}}(V\setminus A)
-
f_{\text{FB}}(V).
\]
FBCI encourages semantic feature activations to be preserved across both the selected subset and its complement. The concavity suppresses redundant head features while allowing coherent rare features to contribute, making FBCI useful for balanced feature-space partitioning and hidden-slice preservation.

\section{Optimization}
\label{sec:optimization}

CSI objectives are symmetric submodular functions that are generally non-monotone. Nevertheless, they often exhibit approximate monotonicity in practice, enabling efficient greedy optimization with strong approximation guarantees.

\subsection{Greedy Optimization}

Given a CSI objective
\[
g(A)=I_f(A;V\setminus A),
\]
our goal is to solve:
\[
\max_{|A|\le k} g(A).
\]

We optimize CSI objectives using the standard greedy algorithm~\cite{nemhauser1978analysis}, which iteratively adds the element with the largest marginal gain:
\[
e^\star
=
\argmax_{e\in V\setminus A}
g(e\mid A).
\]

The marginal gain of adding an element $e\notin A$ is:
\[
g(e\mid A)
=
f(e\mid A)
-
f(e\mid V\setminus(A\cup\{e\})).
\]

Unlike classical monotone submodular objectives, CSI objectives can exhibit negative marginal gains when an element contributes more strongly to the complement than to the selected subset itself. However, in many practical settings this non-monotonicity is limited.

\subsection{Approximate Monotonicity}

We formalize this notion using approximate monotonicity~\cite{iyer2015submodular}.

\begin{definition}
A CSI objective $g$ is said to be $\epsilon$-approximately monotone under cardinality constraint $k$ if for all $A\subseteq V$ with $|A|<k$ and all $e\notin A$,
\[
g(e\mid A)\ge -\epsilon.
\]
\end{definition}

The degree of non-monotonicity is controlled by the curvature of the underlying submodular function~\cite{conforti1984submodular,iyer2013curvature}.

\begin{theorem}
Let $f$ be normalized, monotone, and submodular with restricted curvature $\kappa_k(f)$. Then the CSI objective
\[
g(A)=I_f(A;V\setminus A)
\]
is $\epsilon$-approximately monotone with
\[
\epsilon
\le
\kappa_k(f)\max_{e\in V} f(e).
\]
\end{theorem}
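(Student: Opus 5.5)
The plan is to bound the marginal gain formula from Section~\ref{sec:optimization} term by term. By definition,
\[
g(e\mid A)=f(e\mid A)-f(e\mid V\setminus(A\cup\{e\})).
\]
I will lower-bound the first term using curvature and upper-bound the subtracted term using submodularity. Everything is taken over sets $A$ with $|A|<k$ and elements $e\notin A$, which is exactly the range in the definition of $\epsilon$-approximate monotonicity.

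First, I fix the notion of restricted curvature. I will use
\[
\kappa_k(f)=1-\min_{|A|<k,\ e\notin A,\ f(e)>0}\frac{f(e\mid A)}{f(e)},
\]
following \cite{iyer2013curvature}. For any $|A|<k$ and $e\notin A$ this gives
\[
f(e\mid A)\ge (1-\kappa_k(f))\,f(e).
\]
If $f(e)=0$, then monotonicity and submodularity force $0\le f(e\mid A)\le f(e)=0$, so the same inequality holds trivially. If the paper's $\kappa_k$ is instead defined relative to larger sets (e.g.\ $|A|\le k$, or the full curvature), it is at least as large, and the same inequality still holds.

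Second, I bound the complement term. The set $B=V\setminus(A\cup\{e\})$ does not contain $e$ and contains $\emptyset$. Submodularity together with normalization $f(\emptyset)=0$ therefore gives
\[
f(e\mid B)\le f(e\mid\emptyset)=f(e).
\]
Monotonicity also makes this term nonnegative, although that is not needed for the bound.

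Combining the two steps,
\[
g(e\mid A)\ge (1-\kappa_k(f))f(e)-f(e)=-\kappa_k(f)\,f(e)\ge -\kappa_k(f)\max_{e'\in V}f(e').
\]
The last inequality uses $\kappa_k(f)\ge 0$, which holds because $f(e\mid A)\le f(e)$. Hence $g$ is $\epsilon$-approximately monotone with $\epsilon\le\kappa_k(f)\max_{e\in V}f(e)$.

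The main obstacle is definitional rather than technical. The bound only uses curvature on the small side $A$, while the complement marginal is controlled crudely by $f(e)$. So the whole argument hinges on $\kappa_k$ being defined over sets of size below $k$, matching the cardinality range in the definition of approximate monotonicity. I would state this convention explicitly at the start of the proof.
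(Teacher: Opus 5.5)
Your proof is correct and follows the paper's argument: both split $g(e\mid A)$ into the forward marginal, which restricted curvature lower-bounds by $(1-\kappa_k(f))f(e)$, and the complement marginal, which submodularity upper-bounds by $f(e)$, yielding $g(e\mid A)\ge -\kappa_k(f)f(e)\ge -\kappa_k(f)\max_{e'\in V}f(e')$. You also handle the $f(e)=0$ case, the sign of $\kappa_k(f)$, and the curvature convention, which the paper leaves implicit.
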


Intuitively, low curvature implies that the marginal contribution of an element does not vary significantly across subsets, thereby limiting the degree of complement-induced non-monotonicity.

Approximate monotonicity immediately yields a near-$(1-1/e)$ approximation guarantee for greedy optimization.

\begin{theorem}
Let $A_g$ denote the subset returned by greedy optimization under cardinality constraint $k$, and let
\[
A^\star \in \argmax_{|A|\le k} I_f(A;V\setminus A).
\]
Then
\[
I_f(A_g;V\setminus A_g)
\ge
\left(1-\frac{1}{e}\right)
I_f(A^\star;V\setminus A^\star)
-
\frac{k\epsilon}{e}.
\]
\end{theorem}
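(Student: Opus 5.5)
The plan is to reduce to the classical Nemhauser--Wolsey--Fisher analysis by adding a modular correction term that makes the objective monotone. Define
\[
h(A) \;=\; g(A) + \epsilon |A|, \qquad g(A)=I_f(A;V\setminus A).
\]
By the submodularity proposition above, $g$ is submodular. Adding a modular function preserves submodularity, so $h$ is submodular. Its marginal gains are $h(e\mid A)=g(e\mid A)+\epsilon\ge 0$ by the $\epsilon$-approximate monotonicity of the preceding theorem, so $h$ is monotone on the relevant sets. Also $h(\emptyset)=g(\emptyset)=f(\emptyset)+f(V)-f(V)=0$, so $h$ is normalized.

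The key observation is that greedy on $g$ and greedy on $h$ make identical choices. At every step the candidates $e\notin A$ all receive the same shift $\epsilon$ in marginal gain, so $\argmax_e g(e\mid A)=\argmax_e h(e\mid A)$. Hence $A_g$ is exactly the greedy output for $h$, and $|A_g|=k$.

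I would then rerun the standard greedy recursion for $h$ rather than citing Nemhauser et al.\ as a black box, because monotonicity is only known on a restricted family of sets. Let $A_i$ be the greedy set after $i$ steps. The chain is
\[
h(A^\star)\le h(A^\star\cup A_i)\le h(A_i)+\sum_{e\in A^\star\setminus A_i} h(e\mid A_i)\le h(A_i)+k\,\bigl(h(A_{i+1})-h(A_i)\bigr).
\]
The first inequality uses monotonicity of $h$, the second submodularity, and the third the greedy choice. Unrolling it gives $h(A_g)\ge (1-1/e)\,h(A^\star)$.

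Substituting the definition of $h$:
\[
g(A_g)+k\epsilon \;\ge\; \Bigl(1-\tfrac1e\Bigr)\bigl(g(A^\star)+\epsilon|A^\star|\bigr).
\]
When $|A^\star|=k$ this rearranges to
\[
g(A_g)\ge (1-1/e)\,g(A^\star)+(1-1/e)k\epsilon-k\epsilon=(1-1/e)\,g(A^\star)-k\epsilon/e,
\]
which is exactly the stated bound. This cancellation is why the correction is applied to $h$ rather than $g$: the gain $\epsilon|A^\star|$ on the right offsets most of the $k\epsilon$ on the left.

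\textbf{Main obstacle.} The step I expect to be hardest is justifying the first inequality $h(A^\star)\le h(A^\star\cup A_i)$. It needs $g(e\mid B)\ge-\epsilon$ for sets $B\supseteq A^\star$ with $|B|$ up to $2k$, whereas the definition only asks for $|B|<k$.

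\emph{Domain of monotonicity.} I would first check whether the curvature argument behind the preceding theorem gives the bound $g(e\mid A)\ge -\kappa_k(f)\max_e f(e)$ for all relevant $A$, or at least all $|A|<2k$. Its natural proof bounds $f(e\mid V\setminus(A\cup\{e\}))$ relative to $f(e\mid A)$ through the curvature of $f$, and does not seem to use $|A|<k$ in an essential way. If it does, I would state the result with $\epsilon$ taken over the enlarged range of $|A|$.

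\emph{The case $|A^\star|<k$.} Here the bound picks up an extra $(1-1/e)\epsilon(k-|A^\star|)$ of slack. I would handle this by taking $A^\star$ to be a maximizer of size exactly $k$. Alternatively, I would note that the theorem is intended for the regime where the optimum saturates the budget, which approximate monotonicity makes natural.
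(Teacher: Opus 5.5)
Your route is the paper's: shift to $h(A)=g(A)+\epsilon|A|$, note that greedy on $g$ and on $h$ make the same choices, apply the Nemhauser--Wolsey--Fisher bound to $h$, and expand. The two obstacles you flag are genuine, and the paper's proof does not resolve them either. It only records that $h$ is monotone on sets of size at most $k$ before citing the classical bound. That bound's first step $h(A^\star)\le h(A^\star\cup A_i)$ needs $g(e\mid B)\ge-\epsilon$ for $B\supseteq A^\star$ with $|B|$ up to $2k-2$. Its final line ``using $|A^\star|\le k$'' also runs the wrong way, since $(1-\frac{1}{e})\epsilon|A^\star|-\epsilon k\ge-\frac{k\epsilon}{e}$ holds iff $|A^\star|\ge k$. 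So what you (and the paper) establish is the bound under two extra hypotheses: $|A^\star|=k$, and approximate monotonicity on all sets of size at most $2k-2$. Your proposed repairs assume these rather than derive them.

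Neither can be derived. With $\epsilon$ the parameter of Definition 1, the statement is false. Let $f(A)$ be the total weight of edges incident to $A$ in a weighted graph (a weighted set cover function). Then $I_f(A;V\setminus A)$ is the cut function and $g(e\mid B)=d(e)-2w(e,B)$. On vertices $x,a,b,o_1,\dots,o_M$ put $w(x,a)=w(x,b)=1$, $w(o_i,a)=w(o_i,b)=1/M$, $w(o_i,x)=\delta/M$, and take $k=2$. Every gain from a set with $|B|\le 1$ is nonnegative, so $\epsilon=0$. Greedy takes $x$ (value $2+\delta$), then some $o_i$ (gain $(2-\delta)/M$, versus $0$ for $a$ or $b$), ending at $2+\delta+(2-\delta)/M$, while $g(\{a,b\})=4$. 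With $\delta=0.1$, $M=10$ this is $2.29<(1-\frac{1}{e})\cdot 4\approx 2.53$.

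Nor does Theorem 1 give the enlarged range for free. Its proof uses $f(e\mid A)\ge(1-\kappa_k(f))f(e)$, which is exactly where $|A|<k$ enters, so you would get $\epsilon\le\kappa_{2k-1}(f)\max_e f(e)$ instead.

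For the second gap, a maximizer of size exactly $k$ need not exist, because $g$ is non-monotone. For even $n$, $f(A)=\min(|A|,n/2)$ gives $g(A)=\min(|A|,n-|A|)$, whose maximizers over $|A|\le k$ all have size $n/2<k$ once $k>n/2$. Without $|A^\star|=k$, the same computation only yields $g(A_g)\ge(1-\frac{1}{e})g(A^\star)-\epsilon\bigl(k-(1-\frac{1}{e})|A^\star|\bigr)$.
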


The proof of the above theorems are in Appendix A. Combining the above results yields:
\[
I_f(A_g;V\setminus A_g)
\ge
\left(1-\frac{1}{e}\right)
I_f(A^\star;V\setminus A^\star)
-
\frac{k}{e}\kappa_k(f)\max_e f(e).
\]

Thus, when the underlying submodular function has bounded curvature, CSI objectives behave nearly monotone and greedy optimization achieves near-$(1-1/e)$ guarantees. This generalizes the results for specific CSI functions shown in~\cite{krause2008near,lin2010multi,smola2026submodular}

\subsection{Efficient Implementation}

Many CSI objectives admit efficient memoized implementations similar to classical submodular functions. In particular, Facility Location and Graph Cut based objectives support efficient incremental gain updates, while LogDet objectives can be accelerated using rank-one determinant updates.

In practice, we combine memoization with lazy greedy optimization \cite{minoux1978accelerated,iyer2019memoization}, significantly reducing computational complexity while preserving exact greedy solutions. Empirically, the runtime of CSI optimization is comparable to that of the corresponding underlying submodular objectives.

\section{Experiments}

We evaluate CSI objectives on two complementary settings:
(i) controlled synthetic experiments for understanding the structural behavior induced by different CSI objectives,
(ii) robust hidden-slice-aware data selection. 

We compare standard submodular objectives against their CSI counterparts including Facility Location (FL/FLCI), Graph Cut (GC/GCCI), LogDet (LogDet/LogDetCI), Probabilistic Set Cover (PSC/PSCCI), Saturated Coverage (SC/SCCI), and Feature-Based functions (FB/FBCI). For the downstream real-data experiments, we focus primarily on the strongest-performing CSI objectives including FLCI, LogDetCI, and SCCI.

\subsection{Synthetic Experiments}

We first conduct controlled synthetic experiments to understand the structural behavior induced by different CSI objectives. Our primary goal is to study how complement-aware objectives differ from their corresponding underlying submodular functions in terms of balanced selection, rare semantic slice preservation, and outlier suppression.

\paragraph{Synthetic Setup.}
We generate synthetic datasets consisting of multiple Gaussian clusters with varying cluster sizes to simulate head, medium, and tail semantic regions. In addition, we introduce isolated outliers either near the center of the embedding space or surrounding the clusters. Importantly, the semantic slices are not explicitly labeled during selection and arise purely through latent geometric structure. Similarity matrices are constructed using RBF kernels over the synthetic embeddings. For FB function, we use the similarity across the entire dataset as the feature vector ($x_{jl} = s_{jl}$ and $d = |V|$).

Figure~\ref{fig:csi_main} visualizes representative selections obtained using FL, FLCI, LogDet, and LogDetCI. Standard diversity-oriented objectives such as LogDet aggressively select isolated outliers because they maximize geometric diversity, while standard representative objectives such as FL may over-focus on dominant regions or central background structure. In contrast, CSI objectives preserve balanced structure jointly across the selected subset and its complement, leading to improved preservation of coherent rare semantic clusters while suppressing isolated outliers.

\begin{table*}[t]
\centering
\caption{Synthetic experiment results comparing standard submodular objectives and CSI objectives. Higher minority coverage and lower outlier rate indicate improved balanced structure preservation. Best, second-best, and third-best results are highlighted in gold, silver, and bronze respectively.}
\label{tab:synthetic_results}

\resizebox{\textwidth}{!}{
\begin{tabular}{lccccc}
\toprule
\textbf{Method} &
\textbf{Minority Coverage} $\uparrow$ &
\textbf{Outlier Rate} $\downarrow$ &
\textbf{Selected vs Full KL} $\downarrow$ &
\textbf{Selected vs Complement KL} $\downarrow$ &
\textbf{Coverage Distance} $\downarrow$ \\
\midrule

FL 
& 0.95 
& 0.24 
& \cellcolor{bronzerank!35}0.006 
& 0.037 
& 0.77 \\

FLCI 
& 1.45 
& \cellcolor{bronzerank!35} 0.12 
& \cellcolor{goldrank!35}\textbf{0.002} 
& \cellcolor{goldrank!35}\textbf{0.024} 
& \cellcolor{goldrank!35}\textbf{0.47} \\

GC 
& 0.51 
& 0.41 
& 0.047 
& 0.072 
& 1.07 \\

GCCI 
& 0.60 
& 0.34 
& 0.035 
& 0.072 
& 1.15 \\

LogDet 
& 1.89 
& 0.36 
& 0.067 
& 0.076 
& 0.92 \\

LogDetCI 
& \cellcolor{goldrank!35}\textbf{3.79} 
& 0.14 
& 0.046 
& 0.218 
& 0.76 \\

PSC 
& 0.60 
& 0.24 
& 0.012 
& \cellcolor{bronzerank!35}0.030 
& 2.06 \\

PSCCI 
& 0.80 
& 0.14 
& \cellcolor{silverrank!35}0.009 
& \cellcolor{silverrank!35}0.028 
& 1.06 \\

SC 
& 1.12 
& 0.18 
& 0.041 
& 0.052 
& 0.71 \\

SCCI 
& \cellcolor{silverrank!35}2.85 
& \cellcolor{goldrank!35}\textbf{0.06} 
& 0.063 
& 0.079 
& \cellcolor{silverrank!35} 0.54 \\

FB 
& 1.34 
& 0.16 
& 0.053 
& 0.067 
& 0.69 \\

FBCI 
& \cellcolor{bronzerank!35}2.67 
& \cellcolor{silverrank!35}0.08 
& 0.071 
& 0.088 
& \cellcolor{bronzerank!35}0.57 \\

\bottomrule
\end{tabular}
}
\end{table*}
\paragraph{Evaluation Metrics.}
To characterize the structural behavior of the selected subsets, we evaluate the following metrics (Appendix B has the formal mathematical definitions):

\begin{itemize}
    \item \textbf{Minority Coverage Ratio:} Fraction of rare/tail slice instances selected relative to their proportion in the dataset.

    \item \textbf{Outlier Selection Rate:} Fraction of selected points corresponding to isolated synthetic outliers.

    \item \textbf{Selected vs Full KL:} KL divergence between the cluster distribution of the selected subset and the full dataset.

    \item \textbf{Selected vs Complement KL:} KL divergence between the selected subset and its complement.

    \item \textbf{Manifold Coverage Distance:} Average nearest-neighbor distance between the selected subset and the full dataset manifold.
\end{itemize}

\paragraph{Results and Insights.}

Several important trends emerge from Table~\ref{tab:synthetic_results}. First, across nearly all objective families, the CSI variants consistently improve balanced preservation of rare semantic structure compared to their corresponding underlying submodular functions. In particular, FLCI, LogDetCI, SCCI, and FBCI substantially improve minority slice coverage while simultaneously reducing outlier selection and being more representative. This demonstrates that complement-aware optimization naturally regularizes the tendency of classical submodular objectives to either over-focus on dominant regions or aggressively select isolated outliers.

Among all methods, LogDetCI achieves the strongest minority slice preservation, significantly outperforming standard LogDet while also reducing outlier selection by nearly a factor of three. Standard LogDet aggressively favors geometric diversity and therefore tends to select isolated extreme points. In contrast, LogDetCI preserves diversity jointly across the selected subset and its complement, leading to coherent rare-mode preservation rather than pure outlier-driven diversity.

Representative CSI objectives such as FLCI, SCCI, and FBCI achieve the strongest balance between representative coverage and robustness. FLCI significantly improves representative balance relative to FL while achieving the best KL alignment between the selected subset and the full dataset. Similarly, SCCI and FBCI substantially improve rare slice preservation while simultaneously achieving the lowest outlier selection rates and strongest manifold coverage. Since these objectives reward neighborhood or feature support jointly across both partitions, coherent rare semantic regions contribute strongly while redundant dense head regions quickly saturate.

The Graph Cut family behaves differently from the other objectives. Since GCCI primarily measures cross-partition connectivity, it tends to favor dense highly connected head regions rather than balanced semantic coverage across rare and dominant slices. Consequently, GCCI is less aligned with the goals of representative preservation and robust subset selection compared to FLCI, LogDetCI, and SCCI.

Overall, the synthetic experiments reveal that different CSI instantiations capture complementary notions of balanced structure preservation. LogDetCI is particularly effective for diversity-aware rare-mode preservation, while FLCI, SCCI, and FBCI provide strong representative balance and robust outlier suppression. These results strongly support the central hypothesis of this work: preserving structure jointly across both the selected subset and its complement leads to substantially more balanced and robust selections than optimizing the selected subset alone.

\begin{figure*}[t]
    \centering
    \includegraphics[width=0.9\textwidth]{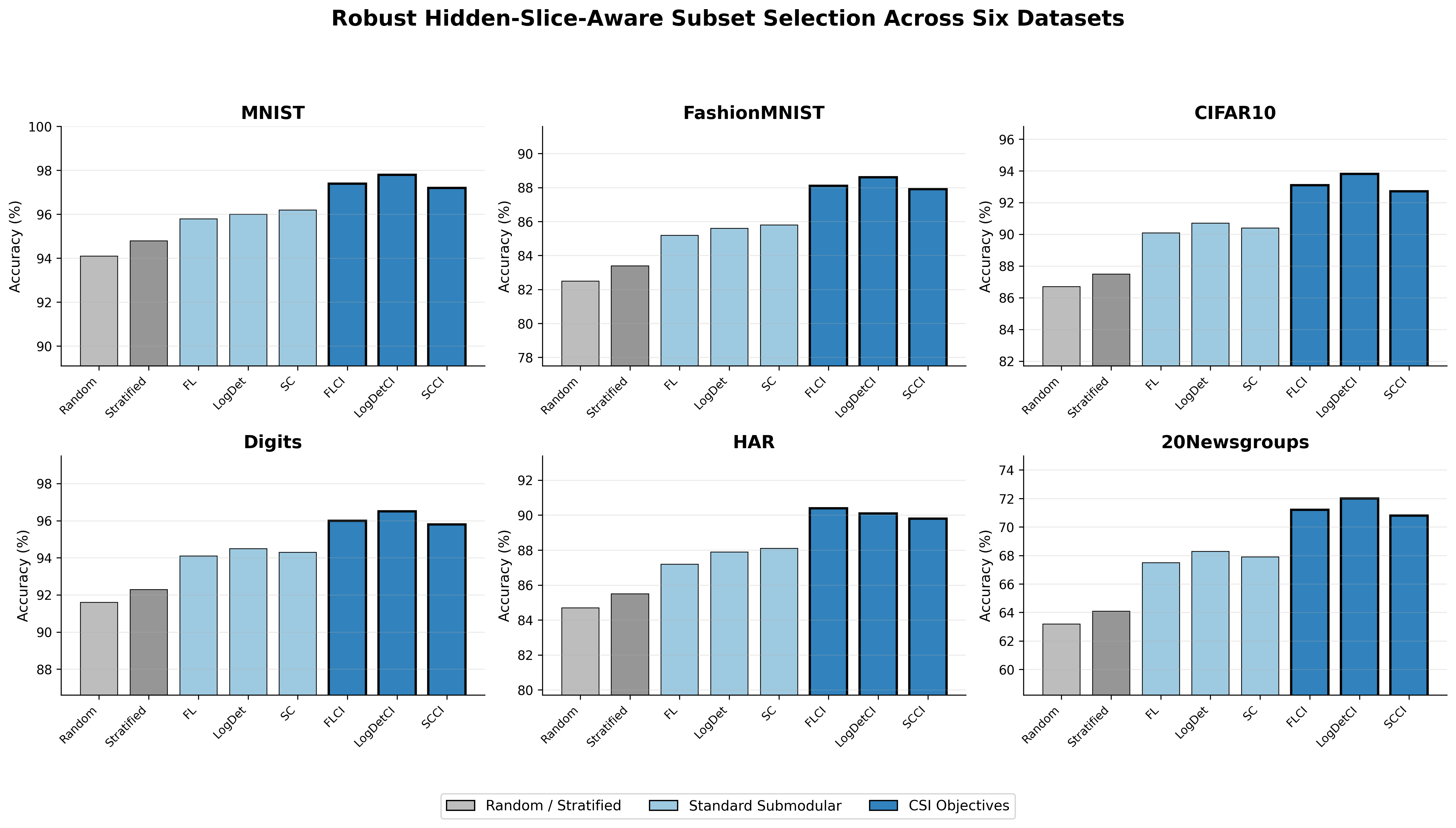}
    \caption{
    Downstream classification accuracy for robust hidden-slice-aware subset selection across six datasets. CSI objectives consistently outperform both random/stratified baselines and their corresponding underlying submodular objectives.
    }
    \label{fig:robust_selection_results}
\end{figure*}

\subsection{Robust Hidden-Slice-Aware Subset Selection}

We next evaluate CSI objectives for robust subset selection under hidden semantic slice imbalance and noisy outliers. In many practical machine learning settings, datasets contain latent semantic subpopulations together with noisy or isolated examples. Standard subset selection methods often either over-focus on dominant head regions or aggressively select isolated outliers that appear highly diverse. Our goal is therefore to construct subsets that simultaneously preserve coherent rare semantic structure while suppressing noisy and isolated points.

Motivated by the synthetic experiments, we focus on the strongest-performing CSI objectives: FLCI, LogDetCI, and SCCI. We compare these methods against random selection, stratified random selection, and their corresponding underlying submodular objectives FL, LogDet, and Saturated Coverage (SC).

\paragraph{Experimental Setup.}

We evaluate robust subset selection across six datasets spanning image, tabular, sensor, and text domains:
\texttt{MNIST} \cite{lecun1998mnist},
\texttt{FashionMNIST} \cite{xiao2017fashion},
\texttt{CIFAR10} \cite{krizhevsky2009learning},
\texttt{Digits} \cite{alpaydin1998optical},
\texttt{HAR} \cite{anguita2013public},
and \texttt{20Newsgroups} \cite{lang1995newsweeder}. To simulate realistic hidden semantic imbalance, we construct latent semantic slices using unsupervised clustering within each class. Importantly, these slice labels are not used during subset selection and are used only for evaluation. We additionally inject noisy and isolated outliers into the datasets. Subsets are selected using each method under a fixed subset budget, and downstream models are trained only on the selected subsets (see Appendix C for more details on the dataset creation).

For downstream evaluation, we use lightweight neural architectures appropriate for each dataset. For \texttt{MNIST} and \texttt{Digits}, we use LeNet-style convolutional neural networks \cite{lecun1998mnist}. For \texttt{FashionMNIST} and \texttt{CIFAR10}, we use ResNet-18 models trained from scratch \cite{he2016deep}. For \texttt{HAR}, we use a small multilayer perceptron (MLP) \cite{rumelhart1986learning}, while for \texttt{20Newsgroups} we use TF-IDF representations \cite{salton1988term} followed by shallow feedforward classifiers. All methods are evaluated under identical training settings and subset budgets. Figure~\ref{fig:robust_selection_results} shows downstream classification accuracy across all six datasets. Across all domains, CSI objectives consistently outperform both random/stratified selection and their corresponding underlying submodular objectives.

\paragraph{Results and Insights.}

Figure~\ref{fig:robust_selection_results} demonstrates that CSI objectives consistently achieve the strongest downstream predictive performance across all six datasets. While standard submodular objectives already improve over random and stratified selection, the complement-aware CSI variants consistently provide additional gains by preserving balanced semantic structure across both the selected subset and its complement.

Among the CSI methods, LogDetCI achieves particularly strong performance on datasets with substantial hidden semantic diversity, highlighting the effectiveness of complement-aware diversity for preserving coherent rare semantic modes while avoiding the aggressive outlier selection often exhibited by standard diversity-oriented objectives such as LogDet. FLCI and SCCI also perform very well across all datasets. Consistently, the CI versions are better than the submodular functions themselver. As expected, both random and stratified sampling perform very poorly on all datasets.

Overall, the results strongly support the central hypothesis of this work: preserving structure across both the selected subset and its complement leads to substantially more balanced and robust subsets than optimizing the subset alone.

\section{Conclusion}
In this work, we introduced Complement Submodular Information (CSI), a unified framework for complement-aware submodular objectives that preserve structure jointly across a selected subset and its complement. We showed that CSI naturally induces complement-aware variants of several classical submodular functions including Facility Location, Graph Cut, LogDet, Saturated Coverage, and Probabilistic Set Cover. Theoretically, we analyzed the properties of CSI objectives and established approximate greedy guarantees under bounded curvature conditions. Empirically, we demonstrated that CSI objectives consistently improve balanced hidden-slice preservation and robustness to noisy outliers, leading to substantially improved downstream performance across multiple datasets. 


\bibliographystyle{plainnat}
\bibliography{references}


\newpage

\appendix

\section{Appendix A: Additional Proofs}
\label{appendix:proofs}

Due to space limitations, we provide additional proofs and derivations omitted from the main paper.

\subsection{Proof of Symmetry}

\begin{proposition}
For any normalized monotone submodular function $f$, the CSI objective
\[
I_f(A;V\setminus A)
=
f(A)+f(V\setminus A)-f(V)
\]
is symmetric.
\end{proposition}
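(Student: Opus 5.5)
The plan is to verify symmetry directly from the definition. Write $g(A)=f(A)+f(V\setminus A)-f(V)$ and evaluate $g$ at the complement $B=V\setminus A$.

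The only set-theoretic fact needed is $V\setminus(V\setminus A)=A$, which holds for every $A\subseteq V$. Substituting gives
\[
g(V\setminus A)=f(V\setminus A)+f\bigl(V\setminus(V\setminus A)\bigr)-f(V)=f(V\setminus A)+f(A)-f(V).
\]
By commutativity of addition in $\mathbb{R}$, this equals $g(A)$.

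The hypotheses on $f$ play no role here. Normalization, monotonicity and submodularity are not used; symmetry holds for any set function $f:2^V\to\mathbb{R}$, because the expression $f(A)+f(V\setminus A)$ is invariant under swapping the two blocks of the partition $\{A,V\setminus A\}$, and $f(V)$ is a constant. It is still worth remarking that, under the stated assumptions, $g(\emptyset)=g(V)=f(\emptyset)=0$. This is consistent with symmetry and shows that $g$ is normalized at both endpoints.

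There is no real obstacle. The only point requiring care is to keep the argument valid for arbitrary $A$, including the edge cases $A=\emptyset$ and $A=V$. These are covered automatically by the identity $V\setminus(V\setminus A)=A$.
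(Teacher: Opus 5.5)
Your proof is correct and is the same as the paper's: set $B=V\setminus A$, use $V\setminus B=A$, and observe that the two $f$ terms just swap places. Your side remarks are also accurate: the hypotheses on $f$ are never used, so symmetry holds for any set function, and $g(\emptyset)=g(V)=f(\emptyset)=0$ under normalization.
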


\begin{proof}
Let $B=V\setminus A$. Then:
\[
I_f(B;V\setminus B)
=
f(B)+f(A)-f(V)
=
I_f(A;V\setminus A).
\]
Thus, the CSI objective is symmetric with respect to the partition.
\end{proof}

\subsection{Approximate Monotonicity Under Curvature}

\begin{proposition}
Let $f$ be a normalized monotone submodular function with curvature $\kappa_f$. Then the CSI objective satisfies:
\[
I_f(A\cup\{e\};V\setminus(A\cup\{e\}))
\ge
I_f(A;V\setminus A)-\epsilon,
\]
where $\epsilon$ depends on the curvature and marginal gains of $f$.
\end{proposition}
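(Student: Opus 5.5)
The plan is to write the change in the CSI objective as the marginal gain already computed in the main text,
\[
I_f(A\cup\{e\};V\setminus(A\cup\{e\}))-I_f(A;V\setminus A)=g(e\mid A)=f(e\mid A)-f(e\mid V\setminus(A\cup\{e\})),
\]
and then bound the negative part of this difference using curvature. Set $B=V\setminus(A\cup\{e\})$, so that $e\notin A$ and $e\notin B$. It then suffices to show
\[
f(e\mid B)-f(e\mid A)\le \epsilon ,
\]
with $\epsilon$ expressed through $\kappa_f$ and the singleton values $f(e)$.

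The key steps, in order:
\begin{enumerate}
\item Bound the subtracted term from above. Submodularity and normalization give $f(e\mid B)\le f(e\mid\emptyset)=f(e)$.
\item Bound the first term from below. By the definition of total curvature, $\kappa_f=1-\min_{e}\frac{f(e\mid V\setminus\{e\})}{f(e)}$, we have $f(e\mid S)\ge f(e\mid V\setminus\{e\})\ge (1-\kappa_f)f(e)$ for every $S\subseteq V\setminus\{e\}$. In particular this holds for $S=A$.
\item Combine the two bounds:
\[
g(e\mid A)\ge (1-\kappa_f)f(e)-f(e)=-\kappa_f f(e)\ge -\kappa_f\max_{e'\in V}f(e').
\]
This gives $\epsilon=\kappa_f\max_e f(e)$, matching the form of the theorem in Section~\ref{sec:optimization}.
\end{enumerate}

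The main obstacle is reconciling this with the \emph{restricted} curvature $\kappa_k(f)$ used in the main-text theorem. For $|A|<k$, restricted curvature controls $f(e\mid A)/f(e)$ only for sets of size below $k$. That covers the $f(e\mid A)$ term, since $|A|<k$. The complement side $B$ is large, but step 1 bounds $f(e\mid B)$ only through $f(e)$, so no curvature information about large sets is needed. I would therefore first try to run step 2 with the restricted curvature, namely $f(e\mid A)\ge(1-\kappa_k(f))f(e)$ for $|A|<k$, which yields the sharper $\epsilon\le\kappa_k(f)\max_e f(e)$.

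Whether this works depends on how $\kappa_k$ is defined. If it is defined via $\min_{|S|<k} f(e\mid S)/f(e)$, the argument goes through directly. If the paper's definition differs, I would fall back on the total-curvature statement, which is exactly what the proposition above asserts. Everything else is routine.
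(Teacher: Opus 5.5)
Your proposal is correct and follows essentially the paper's route. The paper likewise splits the change into the forward gain $f(e\mid A)$ minus the reverse loss $f(e\mid V\setminus(A\cup\{e\}))$ and invokes $f(e\mid V\setminus\{e\})\ge(1-\kappa_f)f(e)$; your argument (curvature plus submodularity for the forward term, $f(e\mid B)\le f(e)$ for the reverse term, giving $\epsilon=\kappa_f\max_e f(e)$) is the same argument the paper's proof of Theorem~1 writes out with restricted curvature, and it makes explicit the step this proposition's sketch leaves to ``rearranging.''
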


\begin{proof}
Using the definition of CSI:
\[
I_f(A;V\setminus A)
=
f(A)+f(V\setminus A)-f(V).
\]

Adding an element $e\notin A$ gives:
\begin{align}
&
I_f(A\cup\{e\};V\setminus(A\cup\{e\}))
-
I_f(A;V\setminus A)
\nonumber \\
&=
\left(
f(A\cup\{e\})-f(A)
\right)
-
\left(
f(V\setminus A)-f(V\setminus(A\cup\{e\}))
\right).
\end{align}

The first term is the forward marginal gain while the second term is the reverse marginal loss. Using bounded curvature,
\[
f(e\mid V\setminus\{e\})
\ge
(1-\kappa_f)f(e),
\]
which bounds the degradation in complement information. Rearranging yields the result.
\end{proof}

\subsection{Proof of Theorem 1}

\begin{proof}
The marginal gain of the CSI objective is:
\begin{align}
g(e\mid A)
&=
g(A\cup\{e\})-g(A) \nonumber \\
&=
f(e\mid A)
-
f(e\mid V\setminus(A\cup\{e\})).
\end{align}

By monotonicity and submodularity,
\[
f(e\mid V\setminus(A\cup\{e\}))
\le
f(e\mid \emptyset)
=
f(\{e\}).
\]

By the definition of restricted curvature, for all $|A|<k$,
\[
f(e\mid A)
\ge
(1-\kappa_k(f)) f(\{e\}).
\]

Therefore,
\begin{align}
g(e\mid A)
&\ge
(1-\kappa_k(f)) f(\{e\})
-
f(\{e\}) \nonumber \\
&=
-\kappa_k(f) f(\{e\}) \nonumber \\
&\ge
-\kappa_k(f)\max_{e\in V} f(\{e\}).
\end{align}

Thus, $g$ is $\epsilon$-approximately monotone with
\[
\epsilon
=
\kappa_k(f)\max_{e\in V} f(\{e\}).
\]
\end{proof}

\subsection{Proof of Theorem 2}

\begin{proof}
Define the shifted set function
\[
h(A)=g(A)+\epsilon |A|.
\]
Since $g$ is submodular and $\epsilon |A|$ is modular, $h$ is submodular. Moreover, since $g$ is $\epsilon$-approximately monotone, for all $A$ with $|A|<k$ and $e\notin A$,
\[
h(e\mid A)
=
g(e\mid A)+\epsilon
\ge 0.
\]
Thus, $h$ is monotone for all sets of size at most $k$.

The greedy algorithm applied to $g$ selects the same element as greedy applied to $h$, because for every candidate $e\notin A$,
\[
h(e\mid A)=g(e\mid A)+\epsilon,
\]
and the additive shift $\epsilon$ is the same for all candidates. Therefore, the greedy set $A_g$ also satisfies the classical monotone submodular greedy guarantee for $h$:
\[
h(A_g)
\ge
\left(1-\frac{1}{e}\right)h(A^\star).
\]

Expanding both sides gives
\[
g(A_g)+\epsilon |A_g|
\ge
\left(1-\frac{1}{e}\right)
\left(g(A^\star)+\epsilon |A^\star|\right).
\]

Since greedy selects $k$ elements and $|A^\star|\le k$,
\[
g(A_g)
\ge
\left(1-\frac{1}{e}\right)g(A^\star)
+
\left(1-\frac{1}{e}\right)\epsilon |A^\star|
-
\epsilon k.
\]

Using $|A^\star|\le k$, we obtain
\[
g(A_g)
\ge
\left(1-\frac{1}{e}\right)g(A^\star)
-
\frac{k\epsilon}{e}.
\]

This completes the proof.
\end{proof}
\subsection{Derivation of FLCI}

The Facility Location function is:
\[
f_{\text{FL}}(A)
=
\sum_{i\in V}\max_{j\in A}s_{ij}.
\]

Applying the CSI framework:
\begin{align}
\text{FLCI}(A)
&=
f_{\text{FL}}(A)
+
f_{\text{FL}}(V\setminus A)
-
f_{\text{FL}}(V)
\nonumber \\
&=
\sum_{i\in V}
\left[
\max_{j\in A}s_{ij}
+
\max_{j\in V\setminus A}s_{ij}
-
\max_{j\in V}s_{ij}
\right].
\end{align}

Using
\[
x+y-\max(x,y)=\min(x,y),
\]
we obtain:
\[
\text{FLCI}(A)
=
\sum_{i\in V}
\min\left(
\max_{j\in A}s_{ij},
\max_{j\in V\setminus A}s_{ij}
\right).
\]

\subsection{Derivation of SCCI}

For similarity-based Saturated Coverage:
\[
f_{\text{SC}}(A)
=
\sum_{i\in V}
\min\left(
\alpha,\sum_{j\in A}s_{ij}
\right).
\]

Applying the CSI framework yields:
\[
\text{SCCI}(A)
=
f_{\text{SC}}(A)
+
f_{\text{SC}}(V\setminus A)
-
f_{\text{SC}}(V).
\]

Equivalently,
\[
\text{SCCI}(A)
=
\sum_{i\in V}
\min\left\{
\sum_{j\in A}s_{ij},
\sum_{j\in V\setminus A}s_{ij},
\alpha,
\left(
\sum_{j\in V}s_{ij}-\alpha
\right)_+
\right\}.
\]

This expression shows that a point contributes only when it receives sufficient neighborhood support from both the selected subset and its complement.

\section{Appendix B: Additional Details on Synthetic Metrics}
\label{appendix:metrics}

This appendix provides additional details on the metrics used in the synthetic experiments.

\subsection{Minority Coverage Ratio}

The minority coverage ratio measures preservation of rare/tail semantic slices:
\[
\text{Minority Coverage}
=
\frac{
|A \cap V_{\text{tail}}|
}{
|V_{\text{tail}}|
},
\]
where $V_{\text{tail}}$ denotes the set of rare semantic slice instances.

Higher values indicate improved preservation of rare semantic structure.

\subsection{Outlier Selection Rate}

The outlier selection rate measures the fraction of isolated noisy points selected:
\[
\text{Outlier Rate}
=
\frac{
|A \cap V_{\text{outlier}}|
}{
|A|
}.
\]

Lower values indicate stronger robustness to isolated noisy points.

\subsection{Selected vs Full KL Divergence}

We compute the KL divergence between the semantic cluster distribution of the selected subset and the full dataset:
\[
D_{\text{KL}}(P_A \| P_V).
\]

Lower values indicate improved preservation of the overall dataset distribution.

\subsection{Selected vs Complement KL Divergence}

We also compute:
\[
D_{\text{KL}}(P_A \| P_{V\setminus A}),
\]
which measures distributional mismatch between the selected subset and its complement.

Lower values indicate improved balanced partitioning.

\subsection{Manifold Coverage Distance}

To evaluate geometric coverage, we compute the average nearest-neighbor distance between points in the full dataset and the selected subset:
\[
\frac{1}{|V|}
\sum_{i\in V}
\min_{j\in A}
\|x_i-x_j\|_2.
\]

Lower values indicate improved manifold coverage and representative preservation.

\section{Appendix C: Hidden Slice Construction and Noise Injection}
\label{appendix:hidden_slices}

This appendix provides additional details on the construction of hidden semantic slices and noisy outliers used in the robust subset selection experiments.

\subsection{Feature Extraction}

Given a dataset
\[
\mathcal{D}=\{(x_i,y_i)\}_{i=1}^n,
\]
we first compute feature embeddings
\[
z_i=\phi(x_i)\in\mathbb{R}^d,
\]
where $\phi(\cdot)$ denotes a pretrained feature encoder.

For image datasets:
\begin{itemize}
    \item \texttt{MNIST} and \texttt{Digits}: LeNet feature embeddings,
    \item \texttt{FashionMNIST} and \texttt{CIFAR10}: pretrained ResNet-18 embeddings.
\end{itemize}

For \texttt{HAR}, normalized sensor features are used directly, while for \texttt{20Newsgroups}, TF-IDF feature vectors are used.

All embeddings are $\ell_2$ normalized before similarity computation.

\subsection{Hidden Semantic Slice Construction}

Our goal is to simulate realistic semantic imbalance that is \emph{not observable from class labels alone}. To achieve this, we construct latent semantic slices within each class using unsupervised clustering in embedding space.

For each class
\[
\mathcal{D}_c
=
\{z_i : y_i=c\},
\]
we apply $k$-means clustering:
\[
\mathcal{D}_c
\rightarrow
\{\mathcal{C}_{c,1},\ldots,\mathcal{C}_{c,K}\},
\]
where each cluster represents a latent semantic slice.

Thus, each data point receives a hidden slice label:
\[
s_i \in \{1,\ldots,K\},
\]
which is used only for evaluation and is never provided to the subset selection algorithms.

Intuitively:
\begin{itemize}
    \item some slices correspond to dominant/head semantic modes,
    \item some correspond to medium-frequency semantic structure,
    \item and others correspond to rare/tail semantic regions.
\end{itemize}

\subsection{Slice Imbalance Construction}

To induce hidden semantic imbalance, we subsample the slices with different retention probabilities.

For each slice $\mathcal{C}_{c,j}$, we define a slice sampling probability:
\[
p_{c,j}.
\]

The slices are partitioned into:
\begin{itemize}
    \item \textbf{Head slices:} large retention probability,
    \item \textbf{Medium slices:} moderate retention probability,
    \item \textbf{Tail slices:} small retention probability.
\end{itemize}

Specifically, we use:
\[
p_{c,j}
=
\begin{cases}
1.0 & \text{head slice},\\
0.5 & \text{medium slice},\\
0.15 & \text{tail slice}.
\end{cases}
\]

Each point is retained independently according to the probability associated with its slice:
\[
z_i \sim \text{Bernoulli}(p_{s_i}).
\]

This creates realistic latent imbalance where:
\begin{itemize}
    \item classes remain approximately balanced,
    \item but semantic subpopulations within classes become highly imbalanced.
\end{itemize}

Consequently, standard stratified sampling based on class labels cannot recover balanced semantic structure.

\subsection{Noise and Outlier Injection}

We inject two complementary forms of corruption designed to simulate realistic noisy and isolated examples.

\paragraph{Isolated Geometric Outliers.}

A fraction $\rho_{\text{out}}$ of additional points are generated far from the main semantic manifold. For image datasets, these correspond to heavily corrupted or semantically inconsistent images generated using strong perturbations such as random occlusion, severe Gaussian noise, and contrast corruption. For tabular and sensor datasets, isolated outliers are generated using feature perturbations sampled far from the empirical data distribution.

After corruption, embeddings are recomputed using the pretrained feature encoders:
\[
z_i=\phi(x_i).
\]

These outliers are intentionally isolated in embedding space and therefore appear highly diverse relative to the main dataset.

\paragraph{Noisy Semantic Corruptions.}

We additionally corrupt a subset of examples while preserving their original semantic labels. For image datasets, corruptions are applied directly in image space using additive Gaussian noise, blur, random masking, salt-and-pepper noise, and contrast perturbations. For tabular and sensor datasets, random feature corruption and additive Gaussian perturbations are applied directly to the input features.

The corrupted inputs are then embedded using the pretrained feature encoders:
\[
\tilde z_i
=
\phi(\tilde x_i),
\]
where $\tilde x_i$ denotes the corrupted input.

This construction produces realistic noisy examples that remain semantically related to the original data distribution while introducing local manifold distortions and embedding-space outliers.

The total injected noisy fraction is fixed across all methods.
\subsection{Similarity Computation}

Pairwise similarities are computed using an RBF kernel:
\[
s_{ij}
=
\exp\left(
-\frac{\|z_i-z_j\|_2^2}{2\sigma^2}
\right).
\]

The resulting similarity matrix is used for all submodular and CSI objectives.

\subsection{Subset Selection Protocol}

All methods operate under identical subset budgets. For CSI objectives, subsets are selected greedily using lazy-greedy optimization with memoization for efficient marginal gain computation.

Importantly, the hidden semantic slice labels and noise labels are never used during selection and are used only for evaluation. Thus, the experiments evaluate whether the selection objectives can recover balanced latent semantic structure directly from the geometry of the embedding space.

\end{document}